\documentclass{article} 
\usepackage{iclr2020_conference,times}


\usepackage{amsmath,amsfonts,bm}









\def\eqref#1{equation~\ref{#1}}









\def\1{\bm{1}}







\def\vzero{{\bm{0}}}

\def\vx{{\bm{x}}}

\def\vz{{\bm{z}}}



\def\mI{{\bm{I}}}

\DeclareMathAlphabet{\mathsfit}{\encodingdefault}{\sfdefault}{m}{sl}
\SetMathAlphabet{\mathsfit}{bold}{\encodingdefault}{\sfdefault}{bx}{n}











\newcommand{\KL}{\mathcal{D}_{\mathrm{KL}}}



\usepackage{hyperref}
\usepackage{url}
\usepackage{times}
\usepackage{epsfig}
\usepackage{graphicx}
\usepackage{amsmath}
\usepackage{amssymb}
\usepackage{amsthm}
\usepackage{booktabs}
\usepackage{color}
\usepackage{multirow}
\usepackage{subfig}
\usepackage{caption}
\usepackage{mmstyle}
\usepackage[normalem]{ulem}
\useunder{\uline}{\ul}{}
\usepackage{multirow}

\theoremstyle{plain}
\newtheorem{thm}{Theorem}

\title{Real or Not Real, that is the Question}


\author{Yuanbo Xiangli$^1$\thanks{Equal contribution.},\quad Yubin Deng$^1$\footnotemark[1],\quad Bo Dai$^1$\footnotemark[1],\quad Chen Change Loy$^2$,\quad Dahua Lin$^1$\\
The Chinese University of Hong Kong~~~~~~~~~~~~~~~~~~~~~~~~~~Nanyang Technological University\\
\texttt{\{xy019,dy015,bdai,dhlin\}@ie.cuhk.edu.hk}~~~~~~\texttt{ccloy@ntu.edu.sg}
}

%

\iclrfinalcopy 
\begin{document}

\maketitle

\begin{abstract}
    While generative adversarial networks (GAN) have been widely adopted in various topics,
in this paper we generalize the standard GAN to a new perspective by treating realness 
as a random variable that can be estimated from multiple angles.
In this generalized framework, referred to as RealnessGAN\footnote{Code will be available at \url{https://github.com/kam1107/RealnessGAN}},
the discriminator outputs a distribution as the measure of realness.
While RealnessGAN shares similar theoretical guarantees with the standard GAN,
it provides more insights on adversarial learning. 
Compared to multiple baselines,
RealnessGAN provides stronger guidance for the generator,
achieving improvements on both synthetic and real-world datasets.
Moreover,
it enables the basic DCGAN \citep{radford2015unsupervised} architecture to generate realistic images at 1024*1024 resolution when trained from scratch.
\end{abstract}

\section{Introduction}
\label{sec:intro}

The development of generative adversarial network (GAN) \citep{goodfellow2014generative, radford2015unsupervised, arjovsky2017wasserstein} is one of the most important topics in machine learning since its first appearance in \citep{goodfellow2014generative}.
It learns a discriminator along with the target generator in an adversarial manner,
where the discriminator distinguishes generated samples from real ones.
Due to its flexibility when dealing with high dimensional data,
GAN has obtained remarkable progresses on realistic image generation \citep{brock2019large}.

In the standard formulation \citep{goodfellow2014generative},
the realness of an input sample is estimated by the discriminator using a \emph{single scalar}.
However,
for high dimensional data such as images,
we naturally perceive them from more than one angles and deduce whether it is life-like based on multiple criteria.
As shown in Fig.\ref{fig:teaser},
when a portrait is given,
one might focus on its facial structure,
skin tint, hair texture and even details like iris and teeth if allowed,
each of which indicates a different aspect of realness.
Based on this observation,
the single scalar could be viewed as an abstract or a summarization of multiple measures,
which together reflect the overall realness of an image. 
Such a concise measurement may convey insufficient information to guide the generator,
potentially leading to well-known issues such as mode-collapse and gradient vanishing.

\begin{figure}[b]
\center
\includegraphics[width=\textwidth]{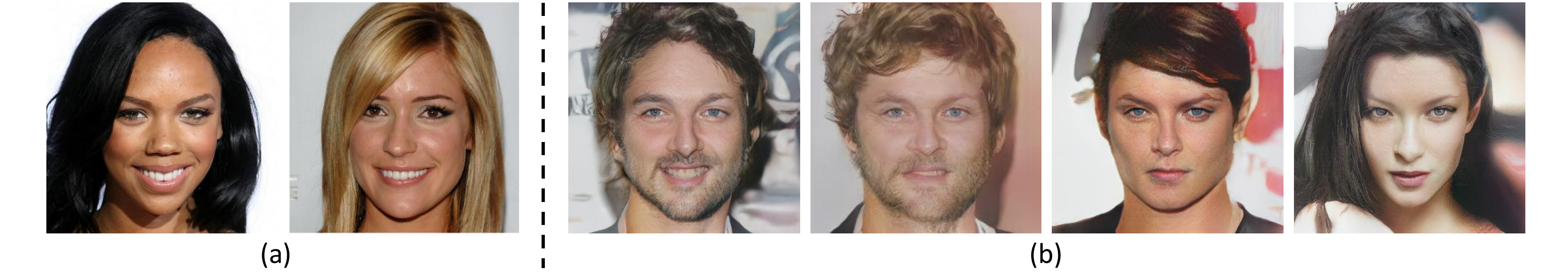}
\caption{The perception of realness depends on various aspects.
(a) Human-perceived flawless. 
(b) Potentially reduced realness due to:
inharmonious facial structure/components,
unnatural background,
abnormal style combination and texture distortion.}
\label{fig:teaser}
\end{figure}

 
In this paper, we propose to generalize the standard framework \citep{goodfellow2014generative} by treating realness as a random variable, 
represented as a distribution rather than a single scalar.
We refer to such a generalization as RealnessGAN.
The learning process of RealnessGAN abide by the standard setting,
but in a distributional form.
While the standard GAN can be viewed as a special case of RealnessGAN,
RealnessGAN and the standard GAN share similar theoretical guarantees.
\ie~RealnessGAN converges to a Nash-equilibrium where the generator and the discriminator reach their optimalities.
Moreover,
by expanding the scalar realness score into a distributional one,
the discriminator $D$ naturally provides stronger guidance to the generator $G$
where $G$ needs to match not only the overall realness (as in the standard GAN),
but the underlying realness distribution as well.
Consequently,
RealnessGAN facilitates $G$ to better approximate the data manifold while generating decent samples.
As shown in the experiments,
based on a rather simple DCGAN architecture, RealnessGAN could successfully learn from scratch to generate realistic images at 1024*1024 resolution.

\section{RealnessGAN}
\label{sec:method}

\subsection{Generative Adversarial Networks}
Generative adversarial network jointly learns a generator $G$ and a discriminator $D$, 
where $G$ attempts to generate samples that are indistinguishable from the real ones,
and $D$ classifies generated and real samples.
In the original work of \citep{goodfellow2014generative},
the learning process of $D$ and $G$ follows a minimax game with value function $V(G, D)$:
\begin{align}
	\min_G \max_D V(G, D) & = \Ebb_{\vx \sim p_\text{data}}[\log D(\vx)] + \Ebb_{\vz \sim p_\vz}[\log(1 - D(G(\vz)))], \\
				& = \Ebb_{\vx \sim p_\text{data}}[\log(D(\vx) - 0)] + \Ebb_{\vx \sim p_g}[\log(1 - D(\vx))], \label{eq:gan}
\end{align}
where the approximated data distribution $p_g$ is defined by a prior $p_\vz$ on input latent variables and $G$.
As proved by \citet{goodfellow2014generative},
under such a learning objective,
the optimal $D$ satisfies $D^*_G(\vx) = \frac{p_\text{data}(\vx)}{p_\text{data}(\vx) + p_g(\vx)}$ for a fixed $G$. 
Fixing $D$ at its optimal,
the optimal $G$ satisfies $p_g = p_\text{data}$.
The theoretical guarantees provide strong supports for GAN's success in many applications \citep{radford2015unsupervised,yu2017seqgan,zhu2017unpaired,dai2017towards},
and inspired multiple variants \citep{arjovsky2017wasserstein,mao2017least,zhao2017energy,berthelot2017began} to improve the original design.
Nevertheless,
a \emph{single scalar} is constantly adopted as the measure of realness,
while the concept of realness is essentially a random variable covering multiple factors,
\eg~texture and overall configuration in the case of images.
In this work,
we intend to follow this observation, encouraging the discriminator $D$ to learn a realness distribution.


\subsection{A Distributional View on Realness}

We start by substituting the scalar output of a discriminator $D$ with a distribution $p_\text{realness}$,
so that for an input sample $\vx$, $D(\vx) = \{p_\text{realness}(\vx, u);u \in \Omega\}$,
where $\Omega$ is the set of outcomes of $p_\text{realness}$.
Each outcome $u$ can be viewed as a potential realness measure, estimated via some criteria.
While $0$ and $1$ in \eqref{eq:gan} are used as two virtual ground-truth scalars that respectively represent the realness of real and fake images,
we also need two virtual ground-truth distributions to stand for the realness distributions of real and fake images.
We refer to these two distributions as $\cA_1$ (real) and $\cA_0$ (fake), which are also defined on $\Omega$.
As in the standard GAN where $0$ and $1$ can be replaced with other scalars such as $-1$ and $1$,
there are various choices for $\cA_1$ and $\cA_0$. Factors lead to a good pair of $\cA_1$ and $\cA_0$ will be discussed later.
Accordingly, the difference between two scalars is replaced with the Kullback-Leibler (KL) divergence.
The minimax game between a generator $G$ and a distributional discriminator $D$ thus becomes
\begin{align}
	\max_G \min_D V(G, D) = \Ebb_{\vx \sim p_\text{data}}[\KL( \cA_1 \Vert D(\vx) )] + \Ebb_{\vx \sim p_g}[\KL( \cA_0 \Vert D(\vx))]. \label{eq:realnessgan}
\end{align}

An immediate observation is that
if we let $p_\text{realness}$ be a discrete distribution with two outcomes $\{u_0, u_1\}$,
and set $\cA_0(u_0) = \cA_1(u_1) = 1$ and $\cA_0(u_1) = \cA_1(u_0) = 0$,
the updated objective in \eqref{eq:realnessgan} can be explicitly converted to the original objective in \eqref{eq:gan},
suggesting RealnessGAN is a generalized version of the original GAN.

Following this observation,
we then extend the theoretical analysis in \citet{goodfellow2014generative} to the case of RealnessGAN.
Similar to \citet{goodfellow2014generative},
our analysis concerns the space of probability density functions, where $D$ and $G$ are assumed to have infinite capacities.
We start from finding the optimal realness discriminator $D$ for any given generator $G$.
\begin{thm}
	When $G$ is fixed, for any outcome $u$ and input sample $\vx$, the optimal discriminator $D$ satisfies 
	\begin{align}
			D^\star_G(\vx, u) = \frac{\cA_1(u)p_\text{data}(\vx) + \cA_0(u)p_g(\vx)}{p_\text{data}(\vx) + p_g(\vx)}.
	\end{align}
\end{thm}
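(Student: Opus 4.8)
The plan is to replicate the pointwise optimization of \citet{goodfellow2014generative}, adapting it to the fact that $D(\vx,\cdot)$ is now a \emph{distribution} on $\Omega$ rather than a free scalar in $[0,1]$. First I would expand both KL terms by their definition and push the expectations through as integrals over $\vx$, so that the value function reads
\begin{align*}
V(G,D) = \int_{\vx} \Big[ & p_\text{data}(\vx)\!\int_\Omega \cA_1(u)\log\tfrac{\cA_1(u)}{D(\vx,u)}\,du \\
& {} + p_g(\vx)\!\int_\Omega \cA_0(u)\log\tfrac{\cA_0(u)}{D(\vx,u)}\,du \Big]\,d\vx .
\end{align*}
The terms $\cA_1\log\cA_1$ and $\cA_0\log\cA_0$ do not depend on $D$, so dropping them and merging the two inner integrals, the $D$-dependent part of $V$ is $-\int_{\vx}\!\int_\Omega \phi(\vx,u)\log D(\vx,u)\,du\,d\vx$ with $\phi(\vx,u)=p_\text{data}(\vx)\cA_1(u)+p_g(\vx)\cA_0(u)$.

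Next I would observe that this functional decouples across $\vx$: for each fixed $\vx$ we independently minimize $-\int_\Omega \phi(\vx,u)\log D(\vx,u)\,du$ over the slice $D(\vx,\cdot)$. The essential difference from the scalar setting is the constraint $\int_\Omega D(\vx,u)\,du = 1$, which ties the outcomes $u$ together and forbids optimizing each value $D(\vx,u)$ separately. I would therefore introduce a Lagrange multiplier $\lambda(\vx)$ for this constraint, set the derivative of the Lagrangian with respect to $D(\vx,u)$ to zero, and obtain the stationarity condition $-\phi(\vx,u)/D(\vx,u)+\lambda(\vx)=0$, i.e. $D(\vx,u)=\phi(\vx,u)/\lambda(\vx)$; note that $\phi\ge 0$ makes the nonnegativity constraint automatically inactive.

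Finally I would fix $\lambda(\vx)$ by imposing normalization: integrating $D(\vx,u)=\phi(\vx,u)/\lambda(\vx)$ over $u$ and using that $\cA_1$ and $\cA_0$ are themselves probability distributions (so $\int_\Omega \cA_1=\int_\Omega \cA_0=1$) gives $\lambda(\vx)=\int_\Omega\phi(\vx,u)\,du = p_\text{data}(\vx)+p_g(\vx)$, which substitutes back to yield exactly the claimed $D^\star_G(\vx,u)$. To certify this stationary point is the global minimizer I would add a one-line convexity remark: $-\phi(\vx,u)\log D$ is convex in $D$ and the feasible set is the probability simplex, so no boundary or second-order analysis is required. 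Equivalently, one can skip multipliers by factoring out $p_\text{data}+p_g$ and recognizing the bracket as a cross-entropy $H(\bar\phi,D)$ with $\bar\phi=\phi/(p_\text{data}+p_g)$ a normalized density, which is minimized at $D=\bar\phi$ by Gibbs' inequality.

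I expect the main point requiring care to be exactly this normalization constraint. In the original GAN one maximizes $a\log t+b\log(1-t)$ over a single unconstrained scalar $t\in[0,1]$, whereas here the whole slice $D(\vx,\cdot)$ must remain a valid distribution; getting the multiplier bookkeeping right and remembering to invoke $\int_\Omega \cA_i=1$ is where the distributional derivation genuinely departs from the scalar one.
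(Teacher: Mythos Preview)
Your proposal is correct and lands on the same optimizer by the same pointwise reduction; the only methodological difference is cosmetic. The paper skips the Lagrange-multiplier step entirely and goes straight to what you list as your ``equivalently'' alternative: after dropping the $D$-independent entropies it factors out $p_\text{data}(\vx)+p_g(\vx)$, names the normalized weight $p_\vx(u)=\phi(\vx,u)/(p_\text{data}(\vx)+p_g(\vx))$, and rewrites the inner integral as $\KL(p_\vx\Vert D(\vx))$ plus constants, so the minimum is read off from nonnegativity of KL. Your multiplier calculation is a perfectly valid (and arguably more explicit) way to reach the same $D^\star_G$, and your convexity/Gibbs remark supplies exactly the global-optimality certificate the paper gets for free from the KL form.
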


\begin{proof}                                                                     
Given a fixed $G$, the objective of $D$ is:
\begin{align}                                                                                            
\min_D V(G, D) & = \Ebb_{\vx \sim p_\text{data}}[\KL(\cA_1 \Vert D(\vx))] + \Ebb_{\vx \sim p_g}[\KL( \cA_0 \Vert D(\vx))], \\
    & = \int_\vx \left( p_\text{data}(\vx) \int_u \cA_1(u) \log \frac{\cA_1(u)}{D(\vx, u)}du + p_g(\vx) \int_u \cA_0(u) \log \frac{\cA_0(u)}{D(\vx, u)} du \right)dx, \\
	& = -\int_\vx \left(p_\text{data}(\vx)h(\cA_1) + p_g(\vx) h(\cA_0)\right)dx \notag \\
	& - \int_\vx \int_u ( p_\text{data}(\vx) \cA_1(u) + p_g(\vx) \cA_0(u)) \log D(\vx, u) du dx, \label{eq:proof1_1}
\end{align}
where $h(\cA_1)$ and $h(\cA_0)$ are their entropies. Marking the first term in \eqref{eq:proof1_1} as $C_1$ since it is irrelevant to $D$, the objective thus is equivalent to:
\begin{align}
	\min_D V(G, D) & = - \int_\vx (p_\text{data}(\vx) + p_g(\vx)) \int_u \frac{p_\text{data}(\vx) \cA_1(u) + p_g(\vx)\cA_0(u)}{p_\text{data}(\vx) + p_g(\vx)}\log D(\vx, u) du dx + C_1,
\end{align}
where $p_\vx(u) = \frac{p_\text{data}(\vx) \cA_1(u) + p_g(\vx)\cA_0(u)}{p_\text{data}(\vx) + p_g(\vx)}$ is a distribution defined on $\Omega$.
Let $C_2 = p_\text{data}(\vx) + p_g(\vx)$, we then have
\begin{align}
	\min_D V(G,D) & = C_1 + \int_\vx C_2 \left( - \int_u p_\vx(u) \log D(\vx, u) du + h(p_\vx) - h(p_\vx)\right) dx, \\
		& = C_1 + \int_\vx C_2 \KL( p_\vx \Vert D(\vx)) dx + \int_\vx C_2 h(p_\vx) dx. \label{eq:proof1_2}	
\end{align}
Observing \eqref{eq:proof1_2}, 
one can see that for any valid $\vx$,
when $\KL( p_\vx \Vert D(\vx) )$ achieves its minimum,
$D$ obtains its optimal $D^\star$,
leading to $D^\star(\vx) = p_\vx$,
which concludes the proof.
\end{proof}

Next, we move on to the conditions for $G$ to reach its optimal when $D = D^\star_G$.

\begin{thm}
	When $D = D^\star_G$, and 
there exists an outcome $u \in \Omega$ such that $\cA_1(u) \ne \cA_0(u)$,
the maximum of $V(G, D^\star_G)$ is achieved if and only if $p_g = p_\text{data}$. 
\end{thm}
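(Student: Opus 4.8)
The plan is to mirror the classical argument of \citet{goodfellow2014generative}: substitute the optimal discriminator $D^\star_G$ from the previous theorem into the value function, rewrite the resulting functional of $G$ as a constant minus a nonnegative divergence term, and then read off the maximizer. First I would insert $D^\star_G(\vx) = p_\vx$ into $V(G, D^\star_G)$, giving
\begin{align}
V(G, D^\star_G) = \int_\vx \left( p_\text{data}(\vx)\, \KL(\cA_1 \Vert p_\vx) + p_g(\vx)\, \KL(\cA_0 \Vert p_\vx) \right) d\vx,
\end{align}
where $p_\vx = \frac{p_\text{data}(\vx)\cA_1 + p_g(\vx)\cA_0}{p_\text{data}(\vx)+p_g(\vx)}$ is the convex combination already identified in the proof of the optimal-discriminator theorem.

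Next, following an ``add a midpoint'' manoeuvre, I would introduce $M = \tfrac{1}{2}(\cA_1 + \cA_0)$ and split each logarithm as $\log\frac{\cA_i}{p_\vx} = \log\frac{\cA_i}{M} + \log\frac{M}{p_\vx}$. The two cross terms recombine, because $p_\text{data}(\vx)\cA_1 + p_g(\vx)\cA_0 = (p_\text{data}(\vx)+p_g(\vx))\,p_\vx$, into the single quantity $-(p_\text{data}(\vx)+p_g(\vx))\,\KL(p_\vx \Vert M)$. Integrating in $\vx$ and using $\int p_\text{data}\,d\vx = \int p_g\,d\vx = 1$ collapses the remaining pieces to the constant $\KL(\cA_1\Vert M) + \KL(\cA_0\Vert M) = 2\,\mathrm{JSD}(\cA_1,\cA_0)$, where $\mathrm{JSD}$ is the Jensen--Shannon divergence, yielding
\begin{align}
V(G, D^\star_G) = 2\,\mathrm{JSD}(\cA_1,\cA_0) - \int_\vx (p_\text{data}(\vx)+p_g(\vx))\,\KL(p_\vx \Vert M)\, d\vx.
\end{align}

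Since KL divergence is nonnegative and $p_\text{data}+p_g \ge 0$, the subtracted integral is nonnegative, so $V(G,D^\star_G) \le 2\,\mathrm{JSD}(\cA_1,\cA_0)$ for every $G$, with equality exactly when $p_\vx = M$ for almost every $\vx$ carrying mass. I would then check that $p_g = p_\text{data}$ attains this bound, since it forces $p_\vx = M$ identically, so the supremum is genuinely realized rather than merely approached.

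The step I expect to be the crux is converting the equality condition $p_\vx = M$ back into $p_g = p_\text{data}$, and this is exactly where the hypothesis that $\cA_1(u) \ne \cA_0(u)$ for some $u$ enters. Writing $p_\vx = M$ out and clearing denominators gives $(p_\text{data}(\vx)-p_g(\vx))(\cA_1(u)-\cA_0(u)) = 0$ for all $u$; evaluating at the distinguishing outcome where $\cA_1$ and $\cA_0$ differ then forces $p_\text{data}(\vx) = p_g(\vx)$ at every $\vx$ in the support. Without this assumption one would have $\cA_1 = \cA_0 = M$ and the bound would be attained trivially by \emph{every} $G$, so the ``only if'' direction would break down; the entire subtlety is in invoking the distinguishing outcome to pin the generator down uniquely.
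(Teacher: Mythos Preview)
Your argument is correct and coincides with the paper's approach: the paper packages your subtracted integral $\int_\vx (p_\text{data}+p_g)\,\KL(p_\vx\Vert M)\,d\vx$ as the single joint-space divergence $2\,\KL(\tfrac{p_\text{data}\cA_1+p_g\cA_0}{2}\Vert\tfrac{(p_\text{data}+p_g)(\cA_1+\cA_0)}{4})$, which is exactly the same quantity, and the equality analysis $(p_\text{data}-p_g)(\cA_1-\cA_0)=0$ together with the invocation of the distinguishing outcome $u$ is identical in both.
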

\begin{proof}
	When $p_g = p_\text{data}$, $D^\star_G(\vx ,u) = \frac{\cA_1(u) + \cA_0(u)}{2}$, 
we have:
\begin{align}
	V^\star(G, D^\star_G) = \int_u \cA_1(u) \log \frac{2\cA_1(u)}{\cA_1(u) + \cA_0(u)} + \cA_0(u) \log \frac{2\cA_0(u)}{\cA_1(u) + \cA_0(u)} du.
\end{align} 
Subtracting $V^\star(G, D^\star_G)$ from $V(G, D^\star_G)$ gives:
\begin{align}
 V^\prime(G, D^\star_G) & = V(G, D^\star_G) - V^\star(G, D^\star_G) \notag \\
    & = \int_\vx \int_u (p_\text{data}(\vx)\cA_1(u) + p_g(\vx)\cA_0(u)) 
\log \frac{(p_\text{data}(\vx) + p_g(\vx))(\cA_1(u) + \cA_0(u))}{2(p_\text{data}(\vx)\cA_1(u) + p_g(\vx)\cA_0(u))} du dx, \\
   & = - 2 \int_\vx \int_u \frac{p_\text{data}(\vx)\cA_1(u) + p_g(\vx)\cA_0(u)}{2}
\log \frac{\frac{p_\text{data}(\vx)\cA_1(u) + p_g(\vx)\cA_0(u)}{2}}{\frac{(p_\text{data}(\vx) + p_g(\vx))(\cA_1(u) + \cA_0(u))}{4}} du dx, \\
   & = - 2 \KL(\frac{p_\text{data}\cA_1 + p_g\cA_0}{2} \Vert  \frac{(p_\text{data} + p_g)(\cA_1 + \cA_0)}{4}).
\end{align}
Since $V^\star(G, D^\star_G)$ is a constant with respect to $G$, maximizing $V(G, D^\star_G)$ is equivalent to maximizing $V^\prime(G, D^\star_G)$.
The optimal $V^\prime(G, D^\star_G)$ is achieved if and only if the KL divergence reaches its minimum, where:  
\begin{align}
\frac{p_\text{data}\cA_1 + p_g\cA_0}{2} & = \frac{(p_\text{data} + p_g)(\cA_1 + \cA_0)}{4}, \\
			(p_\text{data} - p_g)(\cA_1 - \cA_0) & = 0, \label{eq:optimal} 
\end{align}
for any valid $\vx$ and $u$. 
Hence, as long as there exists a valid $u$ that $\cA_1(u) \ne \cA_0(u)$, we have $p_\text{data} = p_g$ for any valid $\vx$.
\end{proof}

\subsection{Discussion}
\label{sec:method_discussion}

The theoretical analysis gives us more insights on RealnessGAN. 

\textbf{Number of outcomes:} according to \eqref{eq:optimal},
each $u \in \Omega$ with $\cA_0(u) \ne \cA_1(u)$ may work as a constraint,  
pushing $p_g$ towards $p_\text{data}$.
In the case of discrete distributions,
along with the increment of the number of outcomes,
the constraints imposed on $G$ accordingly become more rigorous and can cost $G$ more effort to learn.
This is due to the fact that having more outcomes suggests a more fine-grained shape of the realness distribution for $G$ to match.
In Sec.\ref{sec:experiments},
we verified that it is beneficial to update $G$ an increasing number of times before $D$'s update as the number of outcomes grows.

\textbf{Effectiveness of anchors:} view \eqref{eq:optimal} as a cost function to minimize, when $p_\text{data} \neq p_g$,
for some $u \in \Omega$,
the larger the difference between $\cA_1(u)$ and $\cA_0(u)$ is,
the stronger the constraint on $G$ becomes.
Intuitively,
RealnessGAN can be more efficiently trained
if we choose $\cA_0$ and $\cA_1$ to be adequately different.
 
\textbf{Objective of $G$:} according to \eqref{eq:realnessgan},
the best way to fool $D$ is to increase the KL divergence between $D(\vx)$ and the anchor distribution $\cA_0$ of fake samples,
rather than decreasing the KL divergence between $D(\vx)$ and the anchor distribution $\cA_1$ of real samples.
It's worth noting that these two objectives are equivalent in the original work \citep{goodfellow2014generative}.
An intuitive explanation is that, 
in the distributional view of realness,
realness distributions of real samples are not necessarily identical.
It is possible that each of them corresponds to a distinct one.
While $\cA_1$ only serves as an anchor,
it is ineffective to drag all generated samples towards the same target.

\textbf{Flexibility of RealnessGAN:} as a generalization of the standard framework,
it is straightforward to integrate RealnessGAN with different GAN architectures,
such as progressive GANs \citep{karras2018progressive, Karras_2019_CVPR} and conditional GANs \citep{zhu2017unpaired, Ledig2017super}.
Moreover, one may also combine the perspective of RealnessGAN with other reformulations of the standard GAN,
such as replacing the KL divergence in \eqref{eq:realnessgan} with the Earth Mover's Distance.

\subsection{Implementation}
\label{sec:implementation}
In our implementation, the realness distribution $p_\text{realness}$ is characterized as a discrete distribution over $N$ outcomes $\Omega = \{u_0, u_1, ..., u_{N - 1}\}$.
Given an input sample $\vx$, the discriminator $D$ returns $N$ probabilities on these outcomes, following:
\begin{align}
		p_\text{realness}(\vx, u_i) = \frac{e^{\vpsi_i(\vx)}}{\sum_j e^{\vpsi_j(\vx)}},
\end{align}
where $\vpsi = (\vpsi_0, \vpsi_1, ..., \vpsi_{N - 1})$ are the parameters of $D$.
Similarly, $\cA_1$ and $\cA_0$ are discrete distributions defined on $\Omega$.

As shown in the theoretical analysis,
the ideal objective for $G$ is maximizing the KL divergence between $D(\vx)$ of generated samples and $\cA_0$:
\begin{align}
	(G_\mathrm{objective1}) \quad \min_G - \Ebb_{\vz \sim p_\vz}[\KL(\cA_0 \Vert D(G(\vz))]. \label{eq:g_objective1}
\end{align}
However, as the discriminator $D$ is not always at its optimal, especially in the early stage,
directly applying this objective in practice could only lead to a generator with limited generative power.
Consequently, a regularizer is needed to improve $G$.
There are several choices for the regularizer, 
such as the relativistic term introduced in \citep{jolicoeur2018relativistic} that minimizes the KL divergence between $D(\vx)$ of generated samples and random real samples, 
or the term that minimizes the KL divergence between $\cA_1$ and $D(\vx)$ of generated samples,
each of which leads to a different objective:
\begin{align}
	(G_\mathrm{objective2}) \quad & \min_G \quad \Ebb_{\vx \sim p_\text{data}, \vz \sim p_\vz}[\KL(D(\vx) \Vert D(G(\vz))] - \Ebb_{\vz \sim p_\vz}[\KL( \cA_0 \Vert D(G(\vz))], \label{eq:g_objective2} \\
	(G_\mathrm{objective3}) \quad & \min_G \quad \Ebb_{\vz \sim p_\vz}[\KL(\cA_1 \Vert D(G(\vz))] - \Ebb_{\vz \sim p_\vz}[\KL(\cA_0 \Vert D(G(\vz))]. \label{eq:g_objective3}
\end{align}

In Sec.\ref{sec:experiments}, these objectives are compared. And the objective in \eqref{eq:g_objective2}~is adopted as the default choice.

\textbf{Feature resampling.} In practice, especially in the context of images, 
we are learning from a limited number of discrete samples coming from a continuous data manifold.
We may encounter issues caused by insufficient data coverage during the training process.
Inspired by conditioning augmentation mentioned in \citep{Zhang2016StackGANTT},
we introduce a resampling technique performed on the realness output to augment data variance.
Given a mini-batch $\{\vx_0, ..., \vx_{M - 1}\}$ of size $M$,
a Gaussian distribution $\cN(\mu_i, \sigma_i)$ is fitted on $\{\vpsi_i(\vx_0), \vpsi_i(\vx_1), ..., \vpsi_i(\vx_{M - 1})\}$,
which are logits computed by $D$ on $i$-th outcome.
We then resample $M$ new logits $\{\vpsi_i^\prime(\vx_0), ..., \vpsi_i^\prime(\vx_{M - 1});\vpsi_i^\prime \sim \cN(\mu_i, \sigma_i)\}$ for $i$-th outcome
and use them succeedingly.

The randomness introduced by resampling benefits the training of RealnessGAN in two aspects.
First of all,
it augments data by probing instances around the limited training samples,
leading to more robust models.
Secondly,
the resampling approach implicitly demands instances of $\vpsi_i(\vx)$ to be homologous throughout the mini-batch,
such that each outcome reflects realness consistently across samples.
We empirically found the learning curve of RealnessGAN is more stable if feature resampling is utilized,
especially in the latter stage,
where models are prone to overfit.

\section{Related Work}
Generative adversarial network (GAN) was first proposed in \citep{goodfellow2014generative},
which jointly learns a discriminator $D$ and a generator $G$ in an adversarial manner.
Due to its outstanding learning ability,
GANs have been adopted in various generative tasks \citep{radford2015unsupervised,yu2017seqgan,zhu2017unpaired},
among which Deep Convolutional GAN (DCGAN) \citep{radford2015unsupervised} has shown promising results in image generation.

Although remarkable progress has been made. 
GAN is known to suffer from gradient diminishing and mode collapse.
Variants of GAN have been proposed targeting these issues.
Specifically, Wasserstein GAN (WGAN) \cite{arjovsky2017wasserstein} replaces JS-divergence with Earth-Mover's Distance,
and Least-Square GAN (LSGAN) \citep{mao2017least} transforms the objective of $G$ to Pearson divergence.
Energy-based GAN (EBGAN) \citep{zhao2017energy} and Boundary Equilibrium GAN (BEGAN) \citep{berthelot2017began} employ a pre-trained auto-encoder as the discriminator,
learning to distinguish between real and generated samples via reconstruction.
Besides adjusting the objective of GAN,
alternative approaches include more sophisticated architectures and training paradigms.
Generally, 
ProgressiveGAN \citep{karras2018progressive} and StyleGAN \citep{Karras_2019_CVPR} propose a progressive paradigm,
which starts from a shallow model focusing on a low resolution, and gradually grows into a deeper model to incorporate more details as resolution grows.
On the other hand, COCO-GAN \citep{lin2019cocogan} tackles high resolution image generation in a divide-and-conquer strategy.
It learns to produce decent patches at corresponding sub-regions,
and splices the patches to produce a higher resolution image.

It's worth noting that many works on generative adversarial networks have discussed `distributions' \citep{goodfellow2014generative,radford2015unsupervised,arjovsky2017wasserstein},
which usually refers to the underlying distribution of samples.
Some of the existing works aim to improve the original objective using different metrics to measure the divergence between the learned distribution $p_g$ and the real distribution $p_\text{data}$.
Nevertheless,
a single scalar is constantly adopted to represent the concept of realness.
In this paper, we propose a complementary modification that models realness as a random variable follows the distribution $p_\text{realness}$.
In the future work,
we may study the combination of realness discriminator and other GAN variants
to enhance the effectiveness and stability of adversarial learning.

\section{Experiments}
\label{sec:experiments}

In this section we study RealnessGAN from multiple aspects.
Specifically,
1) we firstly focus on RealnessGAN's mode coverage ability on a synthetic dataset.
2) Then we evaluate RealnessGAN on CIFAR10 (32*32) \citep{Krizhevsky09learningmultiple} and CelebA (256*256) \citep{liu2015faceattributes} datasets qualitatively and quantitatively.
3) Finally we explore RealnessGAN on high-resolution image generation task,
which is known to be challenging for unconditional non-progressive architectures.
Surprisingly, on the FFHQ dataset \citep{Karras_2019_CVPR}, RealnessGAN managed to generate images at the 1024*1024 resolution based on a non-progressive architecture.
We compare \emph{RealnessGAN} to other popular objectives in generative adversarial learning,
including the standard GAN (\emph{Std-GAN}) \citep{radford2015unsupervised}, \emph{WGAN-GP} \citep{arjovsky2017wasserstein}, \emph{HingeGAN} \citep{zhao2017energy} and \emph{LSGAN} \citep{mao2017least}.

For experiments on synthetic dataset, we use a generator with four fully-connected hidden layers, each of which has $400$ units, followed by batch normalization and ReLU activation. The discriminator has three fully-connected hidden layers, with $200$ units each layer. LinearMaxout with $5$ maxout pieces are adopted and no batch normalization is used in the discriminator. The latent input $\vz$ is a $32$-dimensional vector sampled from a Gaussian distribution $\cN(\vzero,\mI)$. All models are trained using Adam \citep{DBLP:journals/corr/KingmaB14} for $500$ iterations.

On real-world datasets, the network architecture is identical to the DCGAN architecture in \citet{radford2015unsupervised}, with the prior $p_\vz(\vz)$ a 128-dimensional Gaussian distribution $\cN(\vzero,\mI)$. Models are trained using Adam \citep{DBLP:journals/corr/KingmaB14} for $520k$ iterations. To guarantee training stability, we adopt settings that are proved to be effective for baseline methods. Batch normalization \citep{Ioffe2015batchnorm} is used in $G$, and spectral normalization \citep{miyato2018spectral} is used in $D$. For WGAN-GP we use $lr=1e-4, \beta_1=0.5, \beta_2=0.9$, updating $D$ for $5$ times per $G$'s update \citep{Gulrajani2017improveWGAN}; for the remaining models, we use $lr=2e-4, \beta_1=0.5, \beta_2=0.999$, updating $D$ for one time per $G$'s update \citep{radford2015unsupervised}. Fr\'echet Inception Distance (FID) \citep{heusel2017gans} and Sliced Wasserstein Distance (SWD) \citep{karras2018progressive} are reported as the evaluation metrics.
Unless otherwise stated, $\cA_1$ and $\cA_0$ are chosen to resemble the shapes of two normal distributions with a positive skewness and a negative skewness, respectively. In particular, the number of outcomes are empirically set to 51 for CelebA and FFHQ datasets, and 3 for CIFAR10 dataset.

\subsection{Synthetic Dataset}
Since $p_\text{data}$ is usually intractable on real datasets,
we use a toy dataset to compare the learned distribution $p_g$ and the data distribution $p_\text{data}$.
The toy dataset consists of $100,000$ 2D points sampled from a mixture of $9$ isotropic Gaussian distributions 
whose means are arranged in a 3 by 3 grid, with variances equal to $0.05$.
As shown in Fig.\ref{fig:synthetic_samples}, the data distribution $p_\text{data}$ contains $9$ welly separated modes,
making it a difficult task despite its low-dimensional nature.
\begin{figure}[]
\centering
\vspace{-5mm}
\includegraphics[width=0.9\textwidth]{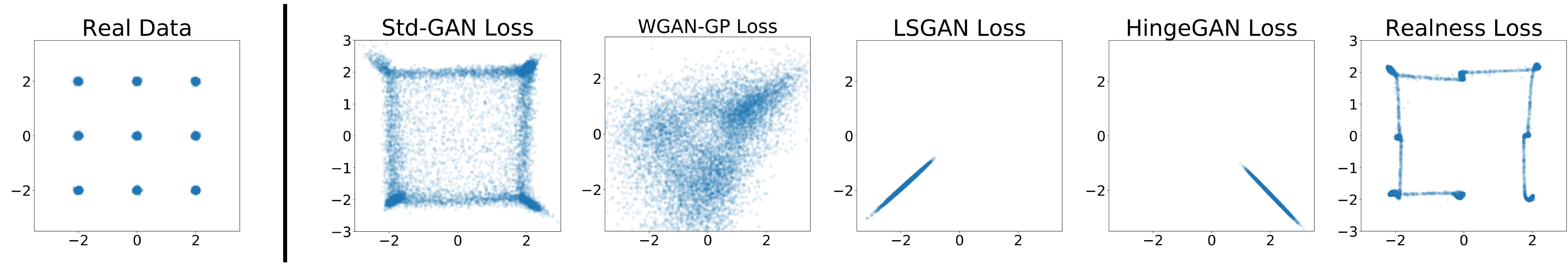}
\caption{Left: real data sampled from the mixture of $9$ Gaussian distributions. 
Right: samples generated by \emph{Std-GAN}, \emph{WGAN-GP}, \emph{LSGAN}, \emph{HingeGAN} and \emph{RealnessGAN}.}
\vspace{-5mm}
\label{fig:synthetic_samples}
\end{figure}

To evaluate $p_g$, we draw $10,000$ samples and measure their quality and diversity.
As suggested in \citep{Dumoulin2016AdversariallyLI},
we regard a sample as of high quality if it is within $4\sigma$ from the $\mu$ of its nearest Gaussian.
When a Gaussian is assigned with more than $100$ high quality samples,
we consider this mode of $p_\text{data}$ is recovered in $p_g$. 
Fig.\ref{fig:synthetic_samples} visualizes the sampled points of different methods,
where \emph{LSGAN} and \emph{HingeGAN} suffer from significant mode collapse,
recovering only a single mode. 
Points sampled by \emph{WGAN-GP} are overly disperse, 
and only $0.03\%$ of them are of high quality.
While \emph{Std-GAN} recovers $4$ modes in $p_\text{data}$ with $32.4\%$ high quality samples,
$8$ modes are recovered by \emph{RealnessGAN} with $60.2\%$ high quality samples.
The average $\sigma$s of these high quality samples in \emph{Std-GAN} and \emph{RealnessGAN} are respectively $0.083$ and $0.043$.
The results suggest that treating realness as a random variable rather than a single scalar 
leads to a more strict discriminator that criticizes generated samples from various aspects,
which provides more informative guidance.
Consequently, $p_g$ learned by \emph{RealnessGAN} is more diverse and compact.

We further study the effect of adjusting the number of outcomes in the realness distribution $p_\text{realness}$ on this dataset.
To start with, we fix $k_G$ and $k_D$ to be $1$, which are the number of updates for $G$ and $D$ in one iteration,
and adjust the number of outcomes of $p_\text{realness}, \cA_0$ and $\cA_1$.
As shown in the first row of Fig.\ref{fig:toy_supp_Giter}, it can be observed that in general $G$ recovers less modes as the number of outcomes grows, 
which is a direct result of $D$ becoming increasingly rigorous and imposing more constraints on $G$.
An intuitive solution is to increase $k_G$ such that $G$ is able to catch up with current $D$.
The second row of Fig.\ref{fig:toy_supp_Giter} demonstrates the converged cases achieved with suitable $k_G$s,
suggesting \emph{RealnessGAN} is effective when sufficient learning capacity is granted to $G$.
The ratio of high quality samples $r_\text{HQ}$ and the number of recovered modes $n_\text{mode}$ in these cases are plotted in Fig.\ref{fig:toy_supp_Giter}.
The two curves imply that besides $k_G$, $r_\text{HQ}$ and $n_\text{mode}$ are all positively related to the number of outcomes,
validating that measuring realness from more aspects leads to a better generator.

\begin{figure}[t]
\centering
\includegraphics[width=0.9\textwidth]{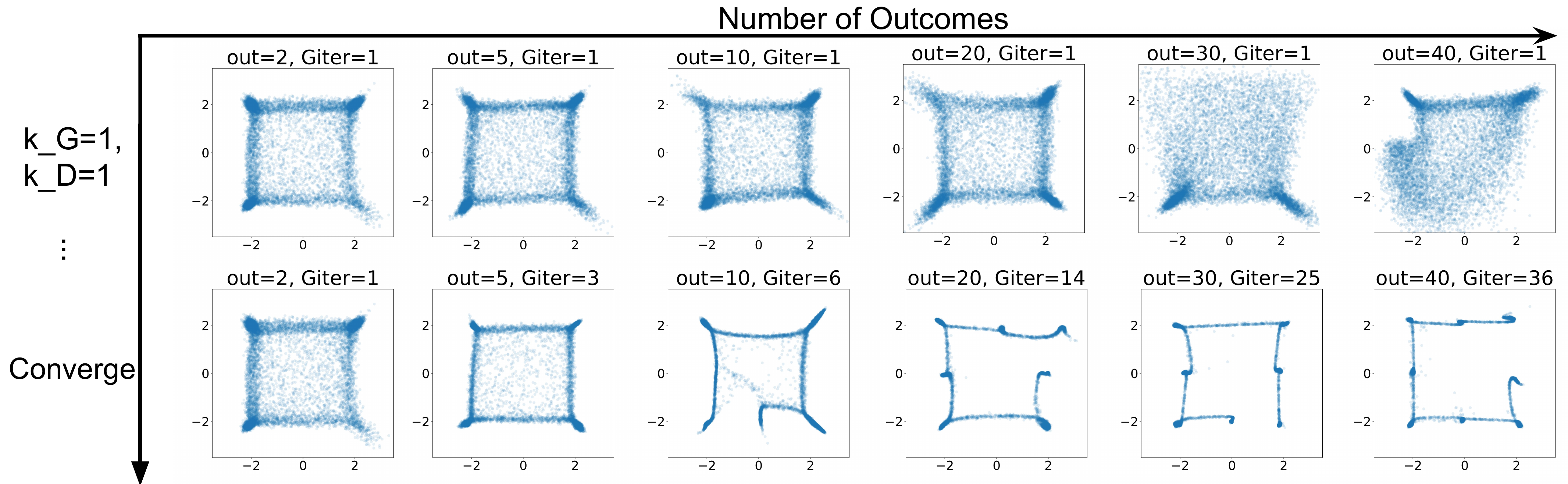}
\includegraphics[width=0.9\textwidth]{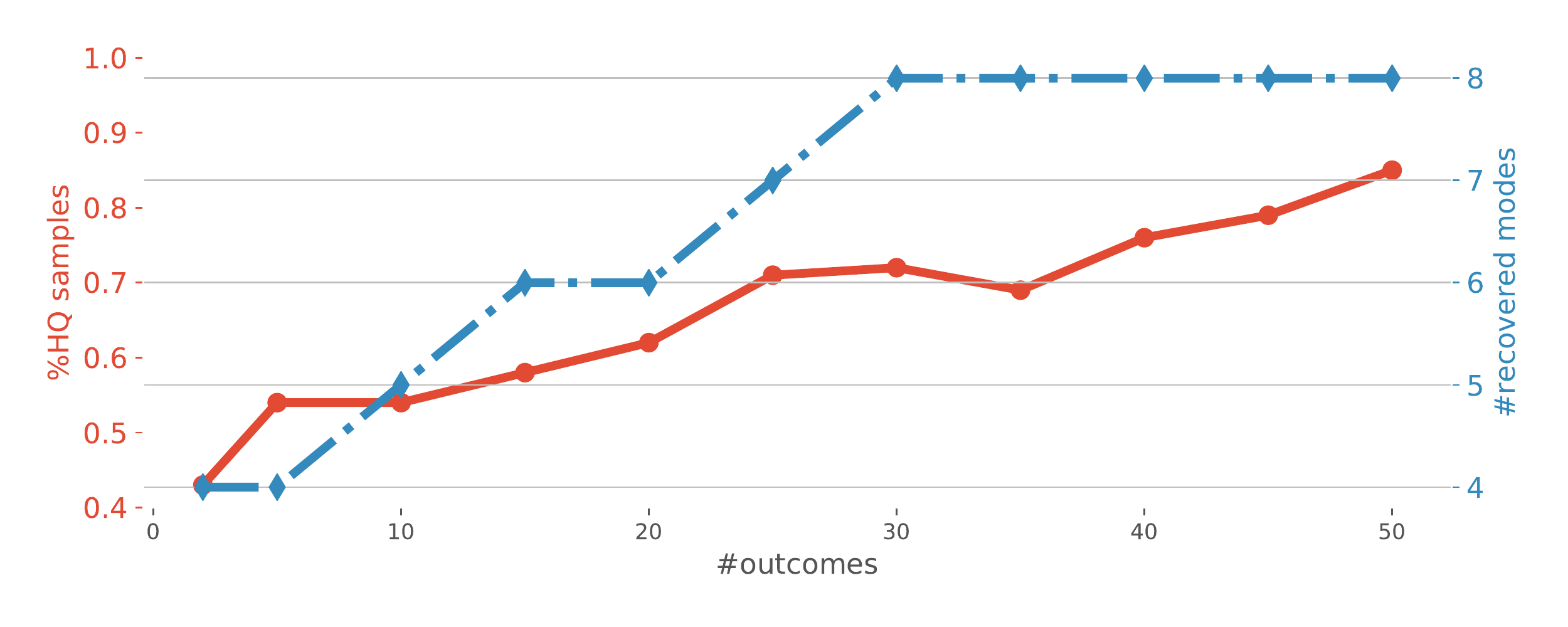}
\vspace{-4mm}
\caption{First row: the results of \emph{RealnessGAN} when fixing $k_G = k_D =1$ and increasing the number of outcomes.
Second row: the results of \emph{RealnessGAN} when $k_G$ is properly increased.
Bottom curves: under the settings of second row, the ratio of high quality samples and the number of recovered modes.}
\label{fig:toy_supp_Giter}
\vspace{-2mm}
\end{figure}

\subsection{Real-world Datasets}
As GAN has shown promising results when modeling complex data such as natural images,
we evaluate \emph{RealnessGAN} on real-world datasets, namely CIFAR10, CelebA and FFHQ,
which respectively contains images at 32*32, 256*256 and 1024*1024 resolutions.
The training curves of baseline methods and \emph{RealnessGAN} on CelebA and CIFAR10 are shown in Fig.\ref{fig:real_training_curves}.
The qualitative results measured in FID and SWD are listed in Tab.\ref{tab:real_metrics}.
We report the minimum, the maximum, the mean and the standard deviation computed along the training process.
On both datasets,
compared to baselines,
\emph{RealnessGAN} obtains better scores in both metrics.
Meantime,
the learning process of \emph{RealnessGAN} is smoother and steadier (see SD in Tab.\ref{tab:real_metrics} and curves in Fig.\ref{fig:real_training_curves}).
Samples of generated images on both datasets are included in Fig.\ref{fig:real_snapshots}.

On FFHQ, we push the resolution of generated images to 1024*1024, 
which is known to be challenging especially for a non-progressive architecture.
As shown in Fig.\ref{fig:real_snapshots},
despite building on a relatively simple DCGAN architecture, \emph{RealnessGAN} is able to produce realistic samples from scratch at such a high resolution. 
Quantitatively, \emph{RealnessGAN} obtains an FID score of $17.18$. 
For reference, our re-implemented \emph{StyleGAN} \citep{Karras_2019_CVPR} trained under a similar setting receives an FID score of $16.12$.
These results strongly support the effectiveness of \emph{RealnessGAN}, as \emph{StyleGAN} is one of the most advanced GAN architectures so far.

\begin{figure}[t]
\centering
\subfloat[FID on CelebA]{\includegraphics[width=.4\textwidth]{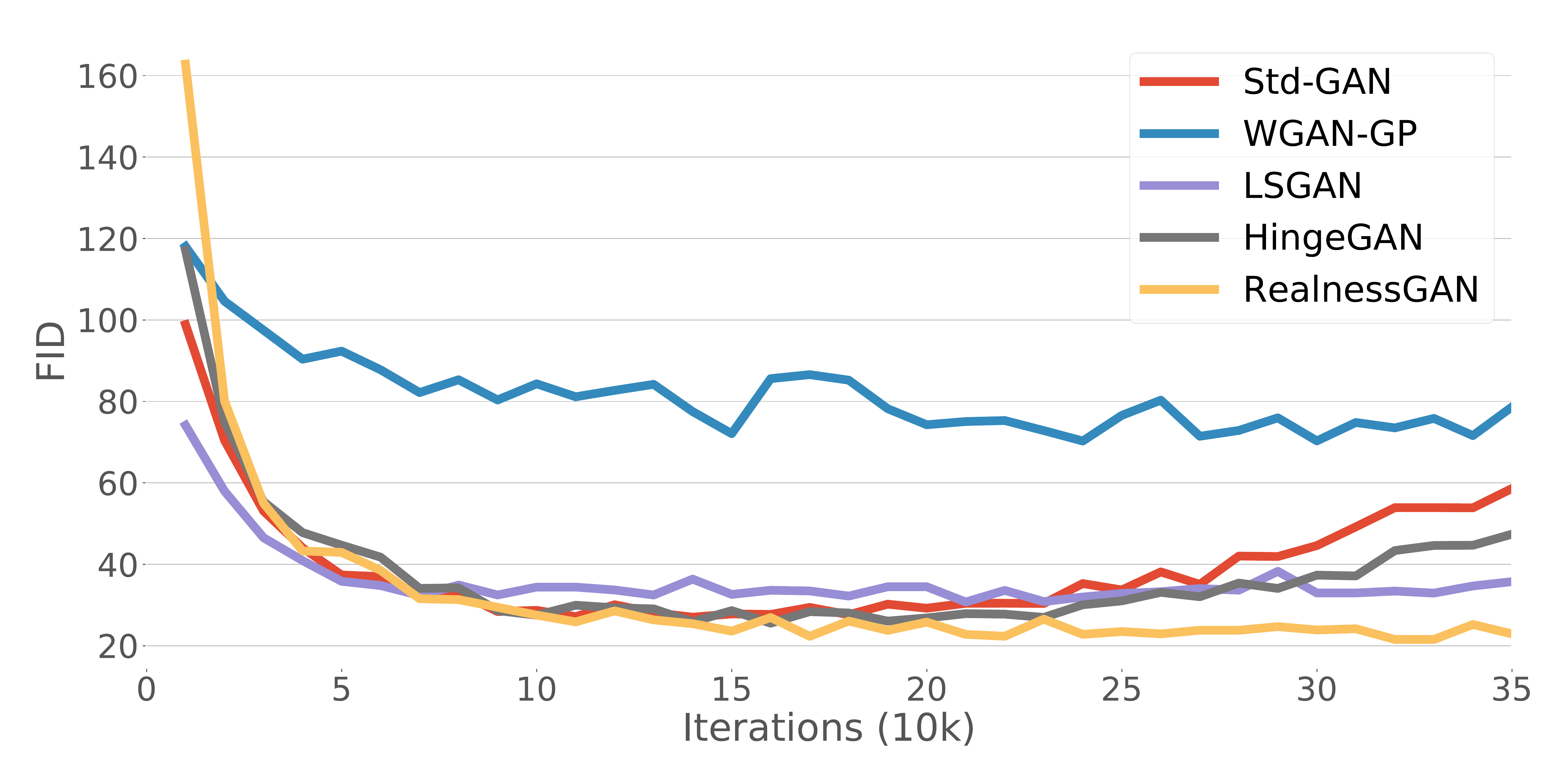}}
\subfloat[SWD on CelebA]{\includegraphics[width=.4\textwidth]{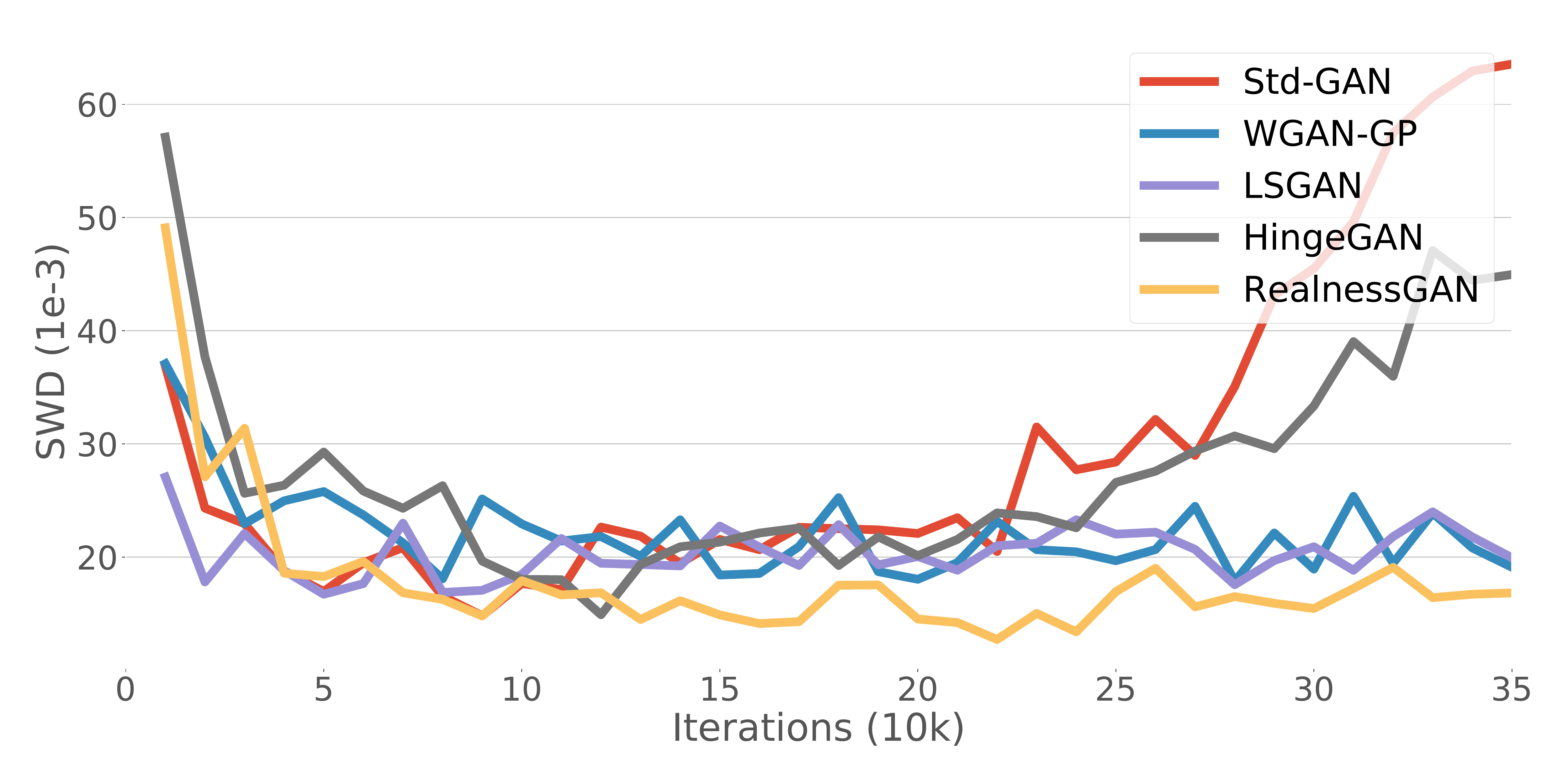}} \\
\vspace{-4mm}
\subfloat[FID on CIFAR10]{\includegraphics[width=.4\textwidth]{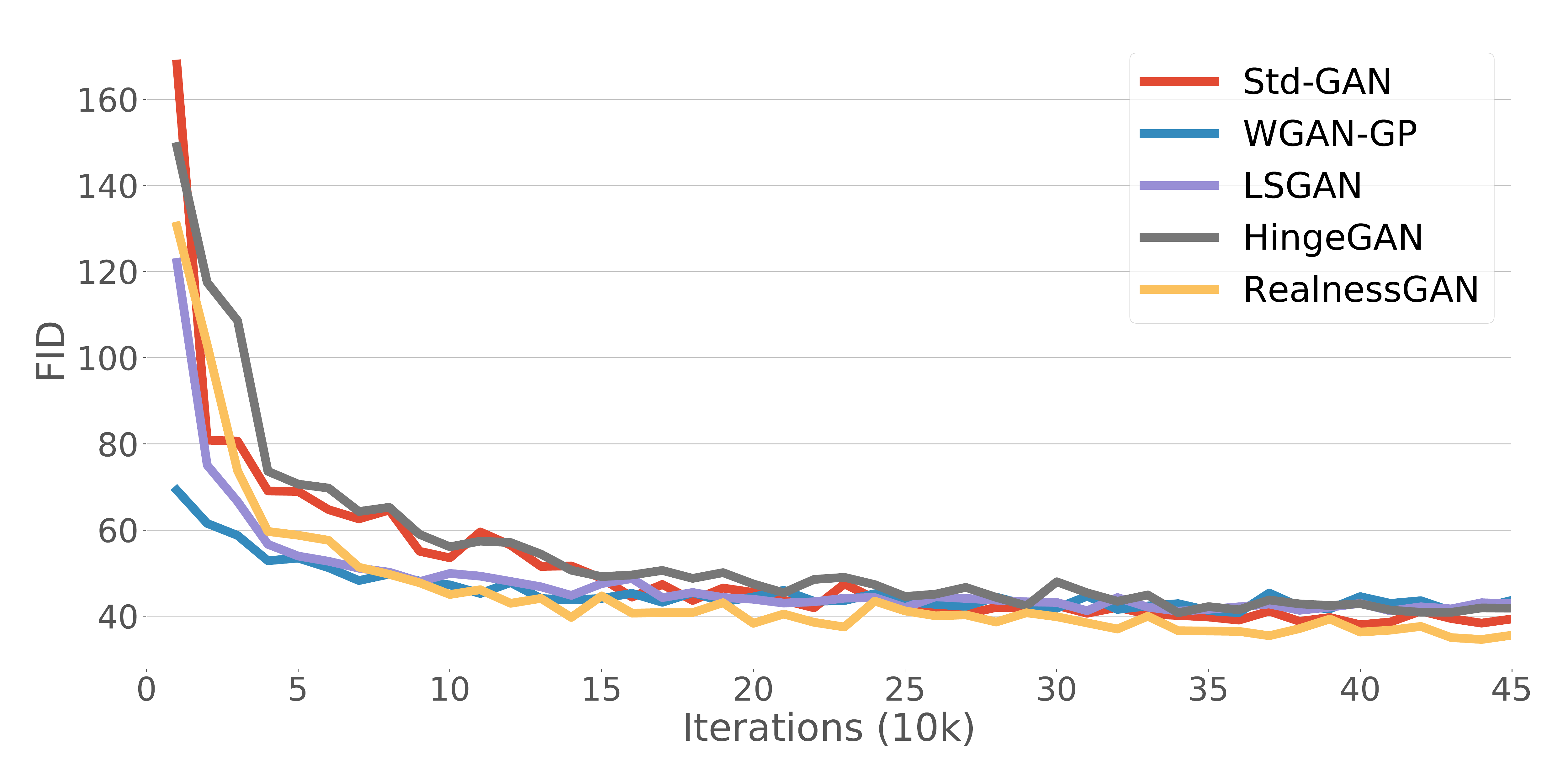}}
\subfloat[SWD on CIFAR10]{\includegraphics[width=.4\textwidth]{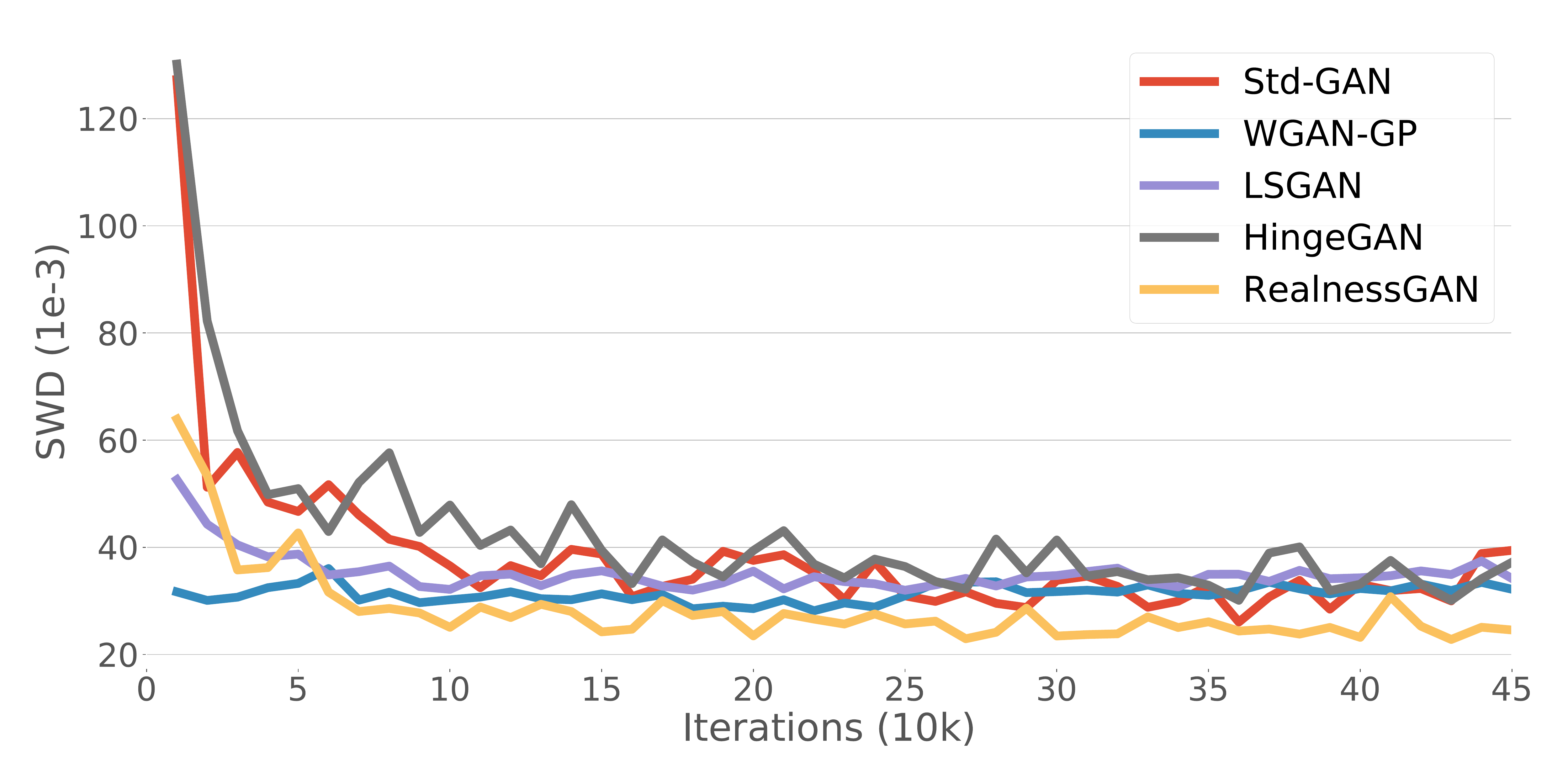}}
\caption{Training curves of different methods in terms of FID and SWD on both CelebA and CIFAR10,
where the raise of curves in the later stage indicate mode collapse. Best viewed in color.}
\vspace{-2mm}
\label{fig:real_training_curves}
\end{figure}

\begin{table}[t]
\centering
\small 
\caption{Minimum (min), maximum (max), mean and standard deviation (SD) of FID and SWD on CelebA and CIFAR10, calculated at 20k, 30k, ... iterations.
The best indicators in baseline methods are underlined.}
\label{tab:real_metrics}
\vspace{-2mm}
\begin{tabular}{ccccccccccc}
\toprule
& \multirow{2}{*}{\textbf{Method}} & \multicolumn{4}{c}{\textbf{FID $\downarrow$}}                           && \multicolumn{4}{c}{\textbf{SWD ($\times 10^3$)} $\downarrow$}                           \\
	\cmidrule{3-6} \cmidrule{8-11}
&                                 & Min            & Max         & Mean           & SD        && Min            & Max         & Mean           & SD        \\ \midrule 
\multirow{5}{*}{\textbf{CelebA}} 
&Std-GAN                          & 27.02          & 70.43       & 34.85          & 9.40       && {\ul 14.81}    & 68.06       & 30.58          & 15.39      \\
&WGAN-GP                          & 70.28          & 104.60      & 81.15          & 8.27       && 17.85          & 30.56       & 22.09          & 2.93       \\
&LSGAN                            & 30.76          & {\ul 57.97} & 34.99          & {\ul 5.15} && 16.72          & {\ul 23.99} & {\ul 20.39}    & {\ul 2.25} \\
&HingeGAN                         & {\ul 25.57}    & 75.03       & {\ul 33.89}    & 10.61      && 14.91          & 54.30       & 28.86          & 10.34      \\ 
\cmidrule{2-11}
&RealnessGAN                      & \textbf{23.51} & 81.3        & \textbf{30.82} & 7.61       && \textbf{12.72} & 31.39       & \textbf{17.11} & 3.59       \\ 
\midrule
\multirow{5}{*}{\textbf{CIFAR10}} 
& Std-GAN                          & {\ul 38.56}    & 88.68       & 47.46          & 15.96      && 28.76          & 57.71       & 37.55          & 7.02       \\
& WGAN-GP                          & 41.86          & 79.25       & {\ul 46.96}    & {\ul 5.57} && {\ul 28.17}    & {\ul 36.04} & {\ul 30.98}    & {\ul 1.78} \\
& LSGAN                            & 42.01          & {\ul 75.06} & 48.41          & 7.72       && 31.99          & 40.46       & 34.75          & 2.34       \\
& HingeGAN                         & 42.40          & 117.49      & 57.30          & 20.69      && 32.18          & 61.74       & 41.85          & 7.31       \\ 
\cmidrule{2-11}
& RealnessGAN                      & \textbf{34.59} & 102.98       & \textbf{42.30} & 11.84       && \textbf{22.80} & 53.38       & \textbf{26.98} & 5.47       \\
\bottomrule
\end{tabular}
\vspace{-5mm}
\end{table}

\vspace{-2mm}
\subsection{Ablation Study}
\vspace{-1mm}
The implementation of \emph{RealnessGAN} offers several choices that also worth digging into. On synthetic dataset, we explored the relationship between the number of outcomes and $G$'s update frequency. On real-world dataset, apart from evaluating \emph{RealnessGAN} as a whole, we also studied the affect of feature resampling, different settings of $\cA_0$ and $\cA_1$ and choices of $G$'s objective. 

\begin{minipage}{\textwidth}
\vspace{-2mm}
\begin{minipage}[b]{0.45\textwidth}
\centering
\small
\captionof{table}{Minimum (min), maximum (max), mean and standard deviation (SD) of FID on CelebA using different anchor distributions, calculated at 20k, 30k, ... iterations.}
\label{tab:anchor_shape}
\begin{tabular}{ccccc}
\toprule
$\KL( \cA_1 \Vert \cA_0)$           & Min                       & Max                       & Mean                      & SD                       \\ \midrule
1.66       & 31.01 & 96.11 & 40.75 & 11.83 \\
5.11 & 26.22                     & 87.98                     & 36.11                     & 9.83                      \\
7.81       & 25.98                     & 85.51                     & 36.30                     & 10.04                     \\ 
11.05    & 23.51                     & 81.30                     & 30.82                     & 7.61      \\ \bottomrule
\end{tabular}
\end{minipage}
\hfill
\begin{minipage}[b]{0.45\textwidth}
\centering
\includegraphics[width=\textwidth]{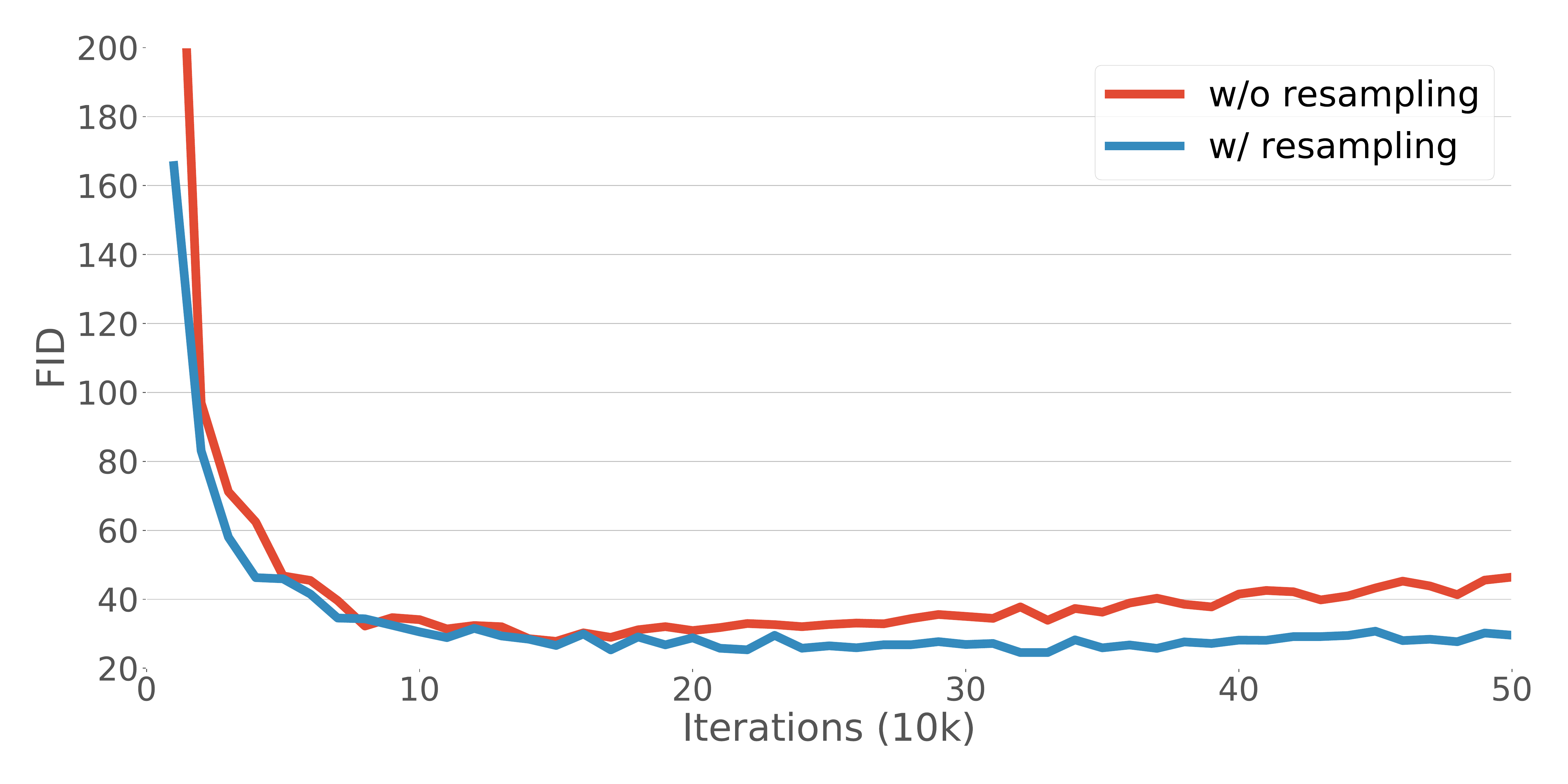}
\captionof{figure}{Training FID curves of \emph{RealnessGAN} with and without feature re-sampling.}
\label{fig:ablation_resample}
\vspace{-10mm}
\end{minipage}
\end{minipage}

\begin{figure}[t]
\centering
\includegraphics[width=0.6\textwidth]{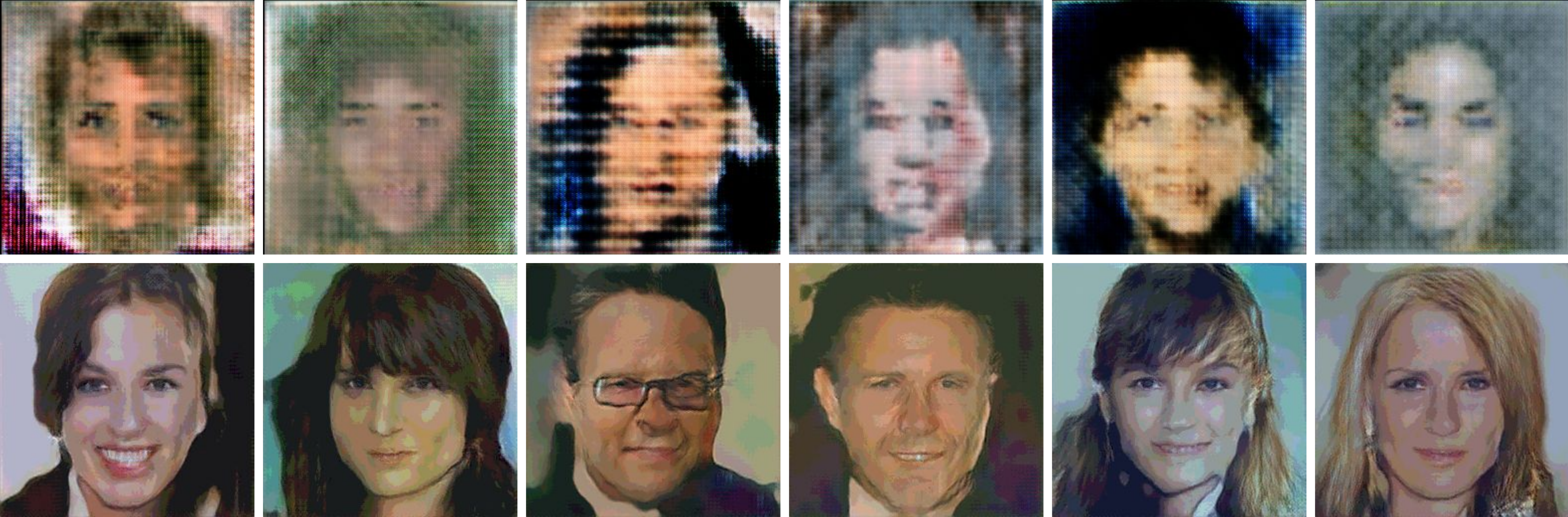}
\caption{Samples generated by \emph{RealnessGAN} trained with the ideal objective (\eqref{eq:g_objective1}).
Top-row: samples when $\KL( \cA_1 \Vert \cA_0) = 11.05$. Bottom-row: samples when $\KL( \cA_1 \Vert \cA_0) = 33.88$.}
\label{fig:ablation_GLOSS2}
\vspace{-3mm}
\end{figure}

\begin{minipage}{\textwidth}
\begin{minipage}[b]{0.5\textwidth}
\centering
\small
\captionof{table}{FID scores of $G$ on CIFAR10, trained with different objectives.}
\label{tab:g_objectives}
\begin{tabular}{cc}
\toprule
G Objective  &  FID  \\ \midrule
Objective1 (\eqref{eq:g_objective1}) & 36.73 \\
Objective2 (\eqref{eq:g_objective2}) & 34.59 \\
Objective3 (\eqref{eq:g_objective3}) & 36.21 \\
DCGAN & 38.56 \\ 
WGAN-GP & 41.86 \\
LSGAN & 42.01 \\
HingeGAN & 42.40 \\ \bottomrule
\end{tabular}
\end{minipage}
\hfill
\begin{minipage}[b]{0.45\textwidth}
\centering
\includegraphics[width=\textwidth]{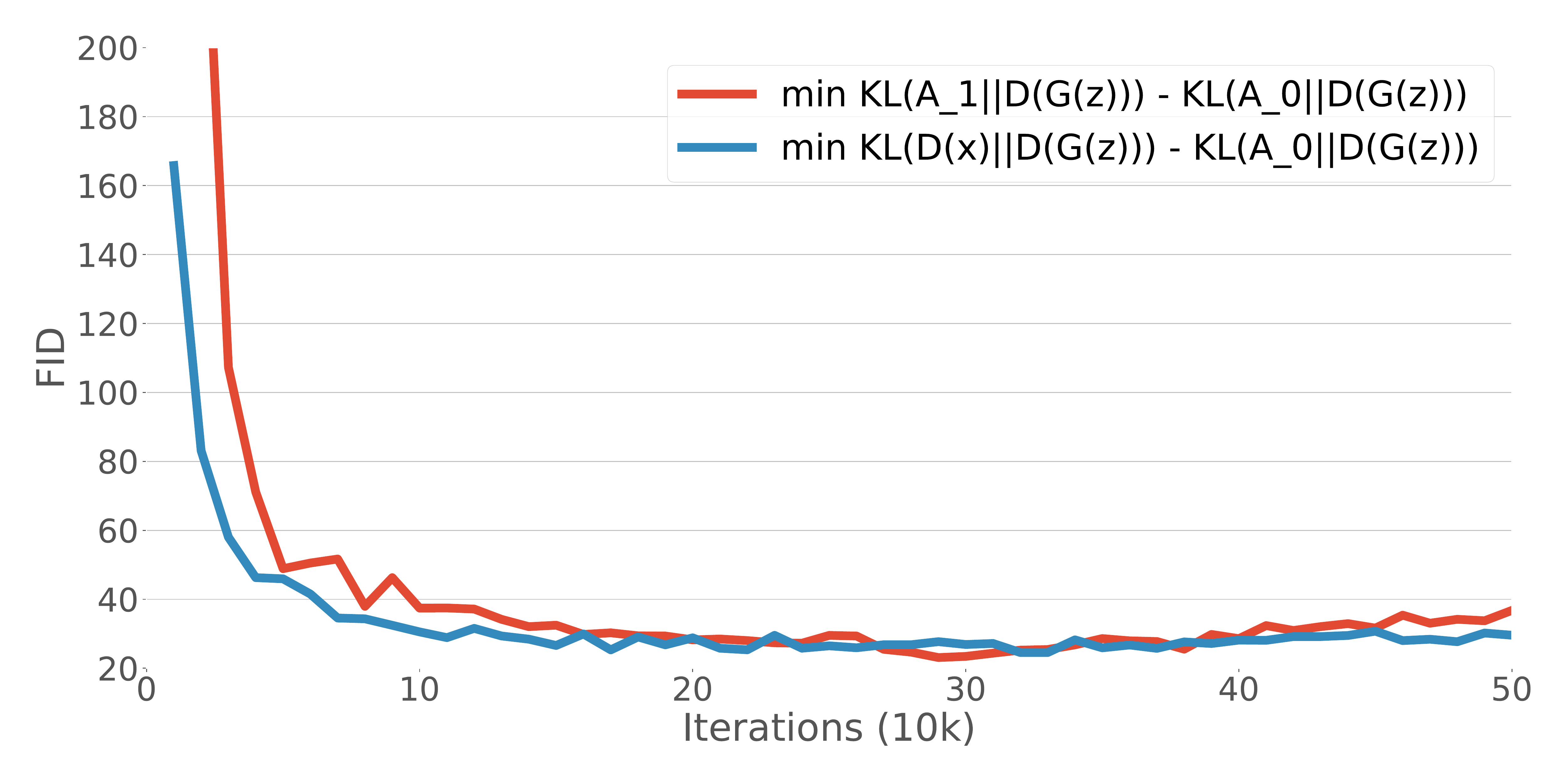}
\captionof{figure}{Training curves of \emph{RealnessGAN} on CelebA using objective2 (\eqref{eq:g_objective2}) and objective3 (\eqref{eq:g_objective3}).}
\label{fig:curve_g_objectives}
\vspace{-15mm}
\end{minipage}
\vspace{5mm}
\end{minipage}

\noindent \textbf{Feature Resampling.}
Fig.\ref{fig:ablation_resample} shows the training curves of \emph{RealnessGAN} with and without feature resampling. It can be noticed that despite the results are similar, feature resampling stabilizes the training process especially in the latter stage. 

\noindent \textbf{Effectiveness of Anchors.}
Tab.\ref{tab:anchor_shape} reports the results of varying the KL divergence between anchor distributions $\cA_0$ and $\cA_1$. The FID score indicates that, as the KL divergence between $\cA_0$ and $\cA_1$ increases, \emph{RealnessGAN} tends to perform better, which verifies our discussion in Sec.\ref{sec:method_discussion} that a larger difference between anchor distributions imposes stronger constraints on $G$. To further testify, two different pairs of anchors with similar KL divergences ($11.95$ and $11.67$) are exploited and they yield comparable FID scores ($23.98$ and $24.22$). 

\noindent \textbf{Objective of G.}
As mentioned in Sec.\ref{sec:method_discussion}, theoretically, the objective of $G$ is $\max_G \Ebb_{\vx \sim p_g}[\KL( \cA_0 \Vert D(\vx))]$. 
However, in practice, since $D$ is not always optimal, we need either a pair of $\cA_0$ and $\cA_1$ that are drastically different, or an additional constraint to aid this objective. 
Fig.\ref{fig:ablation_GLOSS2} shows that, with the ideal objective alone, when the KL divergence between $\cA_0$ and $\cA_1$ is sufficiently large, 
on CelebA we could obtain a generator with limited generative power.
On the other hand, by applying constraints as discussed in Sec.\ref{sec:implementation}, 
$G$ can learn to produce more realistic samples as demonstrated in Fig.\ref{fig:real_snapshots}. 
Similar results are observed on CIFAR10,
where \emph{RealnessGAN} obtains comparable FID scores with and without constraints, 
as shown in Tab.\ref{tab:g_objectives}.
Fig.\ref{fig:curve_g_objectives} also provides the training curves of \emph{RealnessGAN} on CelebA using these two alternative objectives. 

\begin{figure}[t]
\centering
\vspace{-2mm}
\includegraphics[width=.87\textwidth]{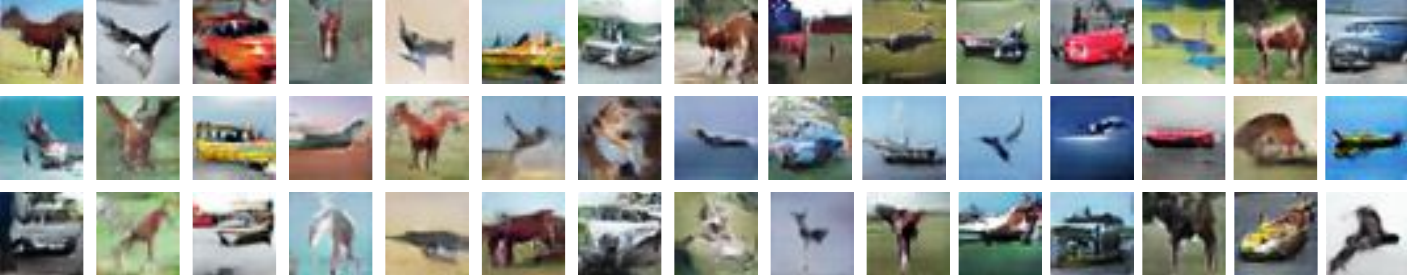}
\includegraphics[width=.87\textwidth]{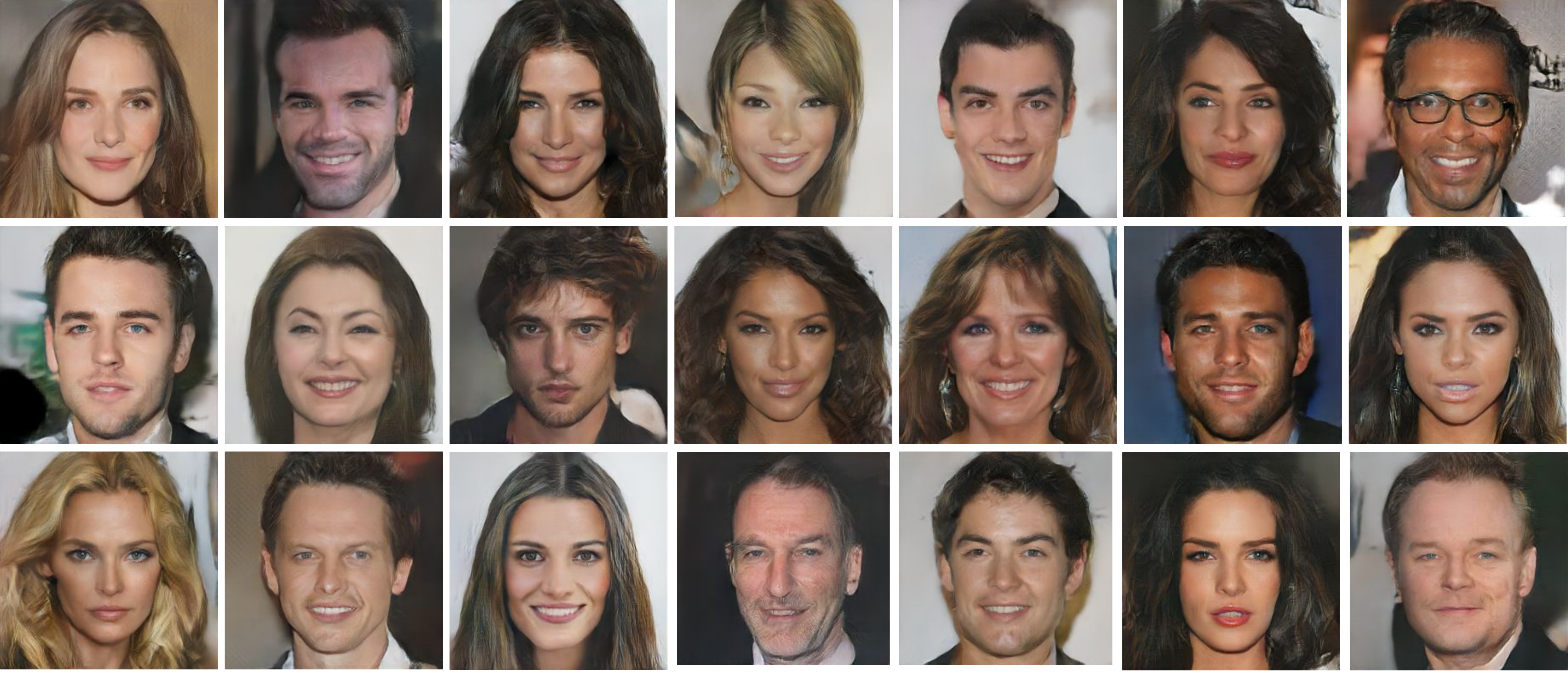}
\includegraphics[width=.87\textwidth]{figs/update/FFHQ_snapshot_update.pdf}
\caption{Images sampled from \emph{RealnessGAN}, respectively trained on CIFAR10 (top), CelebA (middle) and FFHQ (bottom).}
\label{fig:real_snapshots}
\vspace{-6mm}
\end{figure}

\section{Conclusion}
In this paper,
we extend the view of realness in generative adversarial networks
under a distributional perspective.
In our proposed extension, RealnessGAN,
we represent the concept of realness as a realness distribution
rather than a single scalar.
so that the corresponding discriminator estimates realness from multiple angles,
providing more informative guidance to the generator.
We prove RealnessGAN has theoretical guarantees on the optimality of the generator and the discriminator.
On both synthetic and real-world datasets,
RealnessGAN also demonstrates the ability of effectively and steadily capturing the underlying data distribution.

\vspace{5mm}
\paragraph{Acknowledgement}
We thank Zhizhong Li for helpful discussion on the theoretical analysis.
This work is partially supported by the Collaborative Research Grant of "Large-scale Multi-modality Analytics" from SenseTime (CUHK Agreement No. TS1712093), the General Research Funds (GRF) of Hong Kong (No. 14209217 and No. 14205719), Singapore MOE AcRF Tier 1, NTU SUG, and NTU NAP.

\bibliography{iclr2020_conference}
\bibliographystyle{iclr2020_conference}

\end{document}